\definecolor{SubtleColor}{rgb}{0,0,.50}
\crefname{appsec}{Appendix}{Appendices}
\newcommand{\data}{X} 
\newcommand{\param}{\bm{\theta}} 
\newcommand{\undens}{f} 
\newcommand{\D}{\mathcal{D}} 
\newcommand{\family}{\mathcal{H}} 
\newcommand{\post}{p_{\data}} 
\newcommand{\bth}{\bm{\theta}}
\DeclareMathOperator*{\KL}{\mathcal{D}_{\text{KL}}} 
\DeclareMathOperator*{\eKL}{\tilde{\mathcal{D}}_{\text{KL}}}
\DeclareMathOperator*{\E}{\mathbb{E}} 
\DeclareMathOperator*{\argmin}{arg\,min}
\DeclareMathOperator*{\argmax}{arg\,max}
\newcommand{\cmark}{\ding{51}}%
\newcommand{\xmark}{\ding{55}}%
\newtheorem{theorem}{Theorem}
\begin{document}
\sloppy

\title{Boosting Variational Inference}

\author[F.~Guo]{Fangjian Guo}
\address{Massachusetts Institute of Technology}
\urladdr{http://richardkwo.net}
\email{guo@csail.mit.edu}

\author[X.~Wang]{Xiangyu Wang}
\address{Duke University}
\urladdr{http://www2.stat.duke.edu/~xw56/}
\email{xw56@duke.edu}

\author[K.~Fan]{Kai Fan}
\address{Duke University}
\urladdr{http://people.duke.edu/~kf96/}
\email{kai.fan@duke.edu}

\author[T.~Broderick]{Tamara Broderick}
\address{Massachusetts Institute of Technology}
\urladdr{http://www.tamarabroderick.com}
\email{tbroderick@csail.mit.edu}

\author[D.~Dunson]{David B.~Dunson}
\address{Duke University}
\urladdr{http://www2.stat.duke.edu/~dunson/}
\email{dunson@duke.edu}

\date{\today}
\begin{abstract}
Variational inference (VI) provides fast approximations of a Bayesian posterior in part because it formulates posterior approximation as an optimization problem: to find the closest distribution to the exact posterior over some family of distributions. For practical reasons, the family of distributions in VI is usually constrained so that it does not include the exact posterior, even as a limit point. Thus, no matter how long VI is run, the resulting approximation will not approach the exact posterior. We propose to instead consider a more flexible approximating family consisting of \emph{all possible finite mixtures} of a parametric base distribution (e.g., Gaussian). For efficient inference, we borrow ideas from gradient boosting to develop an algorithm we call \emph{boosting variational inference} (BVI). BVI iteratively improves the current approximation by mixing it with a new component from the base distribution family and thereby yields progressively more accurate posterior approximations as more computing time is spent. Unlike a number of common VI variants including mean-field VI, BVI is able to capture multimodality, general posterior covariance, and nonstandard posterior shapes.
\end{abstract}

\maketitle

\section{Introduction} \label{sec:intro}
Bayesian inference offers a flexible framework for learning with rich, hierarchical models of data and for coherently
quantifying uncertainty in unknown parameters through the posterior distribution.  However, for any moderately complex 
model, the posterior is intractable to calculate exactly and must be approximated.  
Variational inference (VI) has grown in popularity as a method for approximating the posterior since it is often fast even for large data sets.
VI is fast partly because it formulates posterior approximation as an optimization problem; the rough idea is to find the closest distribution to the exact posterior over some family of distributions.

Mean-field variational inference (MFVI) is a particularly widely-used variant of VI due in part to its simplicity. MFVI assumes that the approximating distribution factorizes across the parameters of the model, and this assumption leads to an efficient coordinate-ascent algorithm \citep{blei2016variational}. But this assumption also means that 
MFVI effectively cannot capture multimodality and that MFVI tends to underestimate the posterior covariance, sometimes drastically \citep{bishop2006pattern, wang2005inadequacy, turner2011two, rue2009approximate, mackay2003information}.
The linear response technique of \citet{giordano2015linear} and the full-rank approach within \citet{kucukelbir2015automatic} provide a correction to the covariance underestimation of MFVI in the unimodal case but 
do not address the multimodality issue. ``Black-box'' inference, as in \citet{ranganath2014black} and the mean-field approach within \citet{kucukelbir2015automatic},
focus on making the MFVI optimization problem easier for practitioners,
by avoiding tedious calculations, but they do not change the optimization objective of MFVI and therefore still
face the problems outlined here.

An alternative and more flexible class of approximating distributions for variational inference (VI) is the family of mixture models. 
Even if we consider only Gaussian component distributions, one can
find a mixture of Gaussians that is arbitrarily close to \emph{any} continuous probability density \citep{epanechnikov1969non, parzen1962estimation}.  \citet{bishop1998approximating, jaakkola1998improving} and \citet{gershman2012nonparametric}
have previously considered using approximating families with a fixed number of mixture components. Since the problem is non-convex, for practical inference, these authors
also employ a further approximation to the VI optimization objective, or impose constraints on the components (e.g., Gaussians with isotropic covariance and equal weights \citep{gershman2012nonparametric}). 
The resulting optimization algorithms
also suffer from some practical limitations; for instance, they are sensitive to initial values. 
For good performance, one may need rerun the algorithm for multiple different initializations and multiple choices for the number of components.

Finally, in all of the variants of VI discussed above, the family of approximating distributions does not, in general, include the exact posterior---even as a limit point. Thus, no matter how long VI is run, the resulting approximation will not approach the exact posterior.

We propose a new algorithm, \emph{boosting variational inference} (BVI), that addresses these concerns. Inspired by boosting, BVI starts with a single-component approximation and proceeds to add
a new mixture component at each step for a progressively more accurate posterior approximation. Thus, BVI allows us to trade off computing time for more statistical accuracy on the fly. BVI effectively considers a more flexible approximating family consisting of \emph{all possible finite mixtures} of a parametric base distribution (e.g., Gaussian). We will see that this flexibility allows us to capture multiple modes, general posterior covariance, and nonstandard posterior shapes. We highlight that a similar idea and treatment is developed in a concurrent and independent paper~\citet{miller2016variational}.

We start by defining our optimization problem in \Cref{sec:variational}. We review boosting and gradient boosting in \Cref{sec:boosting}
and develop the BVI algorithm in \Cref{sec:bvi}. We demonstrate the improved performance of our algorithm on a range of target posteriors, including those generated from both real and simulated data in \Cref{sec:experiments}.

\section{Variational inference and Gaussian mixtures} \label{sec:variational}
Suppose that we have observed data $\data$, and we identify a set of parameters $\param$ of interest: $\param \in \mathbb{R}^{d}$.
In the remainder, we do not make any assumptions on the form or space of the observed data $\data$.
A Bayesian model is specified with a 
prior $\pi(\param)$ and a likelihood $p(\data | \param)$. Bayes Theorem yields the posterior distribution,
\begin{equation}
p(\param | \data) = \frac{\pi(\param) p(\data | \param)}{p(\data)} \propto \pi(\param) p(\data | \param) =: \undens(\param),\label{eq:bayes}
\end{equation}
which may be considered the product of Bayesian inference and which expresses
the practitioner's knowledge of $\param$ after seeing the data $\data$.
While the unnormalized posterior $f$ in \cref{eq:bayes} is easy to evaluate, the normalizing constant $p(\data)$ usually involves an intractable integral. So an approximation is needed to use the normalized posterior distribution $\post(\param) := p(\param | \data)$
for any moderately complex model.
The posterior $\post(\param)$ may further be used to report a point estimate of $\param$
via a mean, to report an uncertainty estimate via a covariance, or to report some other posterior functional:
$\E_{p} g(\param) = \int \post(\param) g(\param) \mathrm{d} \param$ for some function $g(\cdot)$. For example, $g(\param) = \param$ yields the posterior mean.

One approach to approximating $\post(\param)$ is to find, roughly, the closest
distribution among some family of distributions $\family$, where the distributions in $\family$ are typically easier to work with;
e.g., the calculation of posterior functionals may be easier for distributions in $\family$.
More precisely, we choose
a discrepancy $\D$ between distribution $q(\param)$ and the exact posterior
$\post(\param)$. Then we approximate $\post(\param)$ with the distribution in $\family$ that
minimizes $\D$.
We assume $\D$ is non-negative in general and that $\D$ is zero if and only if $q = p_{\data}$.
In general, though, the optimum 
\begin{equation}
\D^\ast := \inf_{q \in \family} \D(q, \post)
\end{equation}
over some constrained family of distributions $\family$ may be strictly greater than zero.
Roughly, we expect a larger $\family$ to yield a lower $\D^\ast$ but at a higher computational cost.
When $\D(q, \post) = \KL(q || \post)$, the Kullback-Leibler (KL) divergence between $q$ and $\post$, this optimization problem is called
\emph{variational inference}~\citep{jordan1999introduction, wainwright2008graphical, blei2016variational}. 

A straightforward choice for $\family$ is some simple parametric family of known distributions, which we denote by 
\begin{equation}
\family_1 = \{h_{\bm{\phi}}(\param): \bm{\phi} \in \Phi\},
\end{equation}
where $\Phi$ is the space of feasible parameters. For example, we have $ \family_1 = \{ \mathcal{N}_{\bm{\mu}, \bm{\Sigma}}(\bth): \bm{\mu} \in \mathbb{R}^d, \bm{\Sigma} \in \mathbb{R}^{d \times d}, \bm{\Sigma} \text{ is positive semi-definite (PSD)} \} $ for a multivariate Gaussian approximation, and $\family_1 = \{\text{Gamma}(\theta_1; a, b) \cdot \text{Poisson}(\theta_2; \lambda): a, b, \lambda >0\}$ for a mean-field Poisson-Gamma approximation to $p_X(\theta_1, \theta_2)$.

A natural extension to $\family_1$ is the family of \emph{mixtures}. 
Let $\Delta_k$ denote the ($k$-1)-dimensional simplex: $\Delta_k = \{w \in \mathbb{R}^{k}: \sum_{j=1}^{k} w_j = 1, w_j \ge 0\ \text{ for all } j\}$.
Then, 
\begin{equation*}
\family_k = \{h: h(\param) = \sum_{j=1}^{k} w_j h_{\phi_j} (\param), \bm{w} \in \Delta_k, \bm{\phi} \in \Phi^k\}
\end{equation*}
is the set of all $k$-component mixtures over these base distributions, where $\bm{\phi} = (\bm{\phi}_1, \cdots, \bm{\phi}_k)$ is the collection of all component-wise parameters. 
$\family_{1}$, namely a single-component base distribution, is usually adopted for VI, including MFVI. While some authors \citep{bishop1998approximating, jaakkola1998improving, gershman2012nonparametric} have considered $\family_k$ for $k > 1$, these past works have focused on the case where the number of components $k$ is fixed and finite.

In this article, we consider an even larger family $\family_{\infty}$, which is the set of \emph{all finite mixtures}, namely
\begin{equation} \label{eq:family}
	\family_{\infty}
		= \bigcup_{k=1}^{\infty} \mathcal{H}_k.
\end{equation}
Note that $\family_{\infty}$ forms the \emph{convex hull} of the base distributions in $\family_1$~\citep{li1999mixture, rakhlin2006applications}. 
The family $\family_{\infty}$ is highly flexible; for example, if multivariate Gaussians are used for $\family_1$, one can find a member of $\family_{\infty}$ arbitrarily close to any continuous density~\citep{epanechnikov1969non}. 
\Cref{tab:family-comparison} further compares $\family_{\infty}$ with other common choices of family.
Our main contribution is to propose a novel and efficient algorithm for approximately solving the discrepancy minimization problem in $\family_{\infty}$.

\begin{table}[htbp]
\caption{Comparison of different families for VI.}
\label{tab:family-comparison}
\centering
\begin{tabular}{lccc}
\toprule
Family & Covariance & Multimodality & {Arbitrary} \\ 
  &   &   & {approximation} \\ \midrule
{Mean-field} & \xmark & \xmark & \xmark \\ 
{Full-rank $\mathcal{N}_{\bm{\mu}, \bm{\Sigma}}$} & \cmark & \xmark & \xmark \\ 
\textbf{$\mathcal{H}_k$} & \cmark & \cmark & \xmark \\ 
\textbf{$\mathcal{H}_{\infty}$} & \cmark & \cmark & \cmark \\ \bottomrule
\end{tabular}
\end{table}

\section{Boosting and Gradient Boosting} \label{sec:boosting}
For a general target distribution $\post(\param)$, we do not expect
to achieve zero, or infimum, discrepancy for any finite mixture. 
Rather, we expect to get progressively
closer to the infimum by increasing the mixture size. In the case
where the base distribution is Gaussian, we expect the discrepancy to approach
zero as we increase the mixture size.
More precisely, we consider a greedy, incremental procedure, as in \citet{zhang2003sequential},
to construct a sequence of finite mixtures $q_1, q_2, \cdots$ for each $q_t \in \mathcal{H}_t$. 
The quality of approximation can be measured with the excess discrepancy 
\begin{equation*}
\Delta \D(q_t) := \D(q_t, p_{\data}) - \mathcal{D}^{\ast} \geq 0,
\end{equation*}
and we want $\Delta \D(q_t) \rightarrow 0$ as $t \rightarrow \infty$.
Then, for any $\epsilon > 0$, we can always find a large enough $t$ such that $\Delta \D(q_t) \leq \epsilon$. 

In particular, we start with a single base distribution $q_1 = h_{\bm{\phi}_1}$ for some $\bm{\phi}_1 \in \bm{\Phi}$.
In practice, a crude initialization (e.g., $q_1 = \mathcal{N}(\bm{0}, c\bm{I})$) satisfying $\text{supp}(q_1) \subseteq \text{supp}(p_X)$ should suffice.
Iteratively, at each step $t = 2,3,\ldots$, let $q_{t-1}$ be the approximation from the previous step.
We form $q_{t}$ by mixing a new base distribution $h_t$ with weight $\alpha_t \in [0,1]$ together with $q_{t-1}$ with weight $(1-\alpha_t)$, i.e.,
\begin{equation}
q_{t} = (1 - \alpha_t) q_{t-1} + \alpha_t h_t.
\end{equation}
This approach is called \emph{greedy} \citep[Ch.~4]{rakhlin2006applications} if we choose (approximately) the optimal base distribution $h_t$ and weight $\alpha_t$ at each step $t$, so the resulting $q_t$ satisfies:
\begin{equation}
\mathcal{D}(q_t, p_{\data}) \leq \inf_{\substack{\bm{\phi} \in \bm{\Phi} \\0 \leq \alpha \leq 1}} \mathcal{D} ((1-\alpha) q_{t-1} + \alpha h_{\bm{\phi}}, p_\data) + \epsilon_t. \label{eq:greedy}
\end{equation}
That is, rather than choosing $\alpha_t, h_t$ to be exactly the optimal $\alpha, h_{\bm{\phi}}$ at any round, it is enough to choose $\alpha_t, h_t$ so that the discrepancy is within some non-negative sequence $\epsilon_t \searrow 0$ of optimality. 

At each step the approximation $q_t$ remains normalized by construction and takes the form of a mixture of base distributions.
The iterative updates are in the style of \emph{boosting} or \emph{greedy error minimization} \citep{freund1995desicion, freund1999short, friedman2000additive, li1999mixture}. 
Under convexity and strong smoothness conditions on $\D(\cdot, p_X)$, {\em Theorem II.1} of \citet{zhang2003sequential} guarantees that $\Delta \D(q_t)$ converges to zero at rate $O(1/t)$ if \cref{eq:greedy} is performed exactly.  
We verify that KL divergence satisfies these conditions, subject to regularity conditions on the $q_t$, in \cref{th:KL-consistency}.

\subsection{Gradient Boosting}
Let $\D(q) := \D(q,p_\data)$ as a shorthand notation. Rather than jointly optimizing $\D((1-\alpha_t)q_{t-1} + \alpha_t h_t)$
over $(\alpha_t, h_t)$, which is \emph{non-convex} and difficult in general, we consider nearly optimal choices in the spirit of \cref{eq:greedy}.
In particular, we choose $h_t$ first, in the style of gradient descent. Then we fix $h_t$ and optimize the corresponding weight $\alpha_t$; we show in \cref{sec:weight_boost} that choosing the optimal $\alpha_t$ is a convex problem. 

To find $h_t$, we use gradient boosting \citep{friedman2001greedy} and consider the {\em functional gradient} $\nabla \D(q)$ at the current solution $q=q_{t-1}$.
In what follows, we adopt the notation $\langle g, h \rangle = \int g(\bth) h(\bth) d\bth$ for functions $g$ and $h$. In its general form, gradient boosting considers minimizing a loss $\mathcal{L}(f)$ by perturbing around its current solution $f$, where $f(\cdot)$ is a function. Suppose we \emph{additively} perturb $f$ to $f + \epsilon \  h$ with a function $h(\cdot)$; then a Taylor expansion would yield
\begin{equation}
\mathcal{L}(f + \epsilon h) = \mathcal{L}(f) + \epsilon \ \langle h, g \rangle + o(\epsilon^2)
\end{equation}
as $\epsilon \searrow 0$.
The functional gradient can be defined from the resulting linear term: $\nabla \mathcal{L}(f) := g$.

In our case, we consider the perturbation from $q$ to $(1 - \epsilon) q + \epsilon \ h$ so that the solution remains normalized, where now $h$ is a distribution that describes the shape of the perturbation and $\epsilon \in [0,1]$ describes the size of the perturbation. Now, as $\epsilon \searrow 0$, a Taylor expansion yields
\begin{equation}
\begin{split}
\D((1 - \epsilon) q + \epsilon h) &= \D(q + \epsilon \ (h - q)) \\
& =\D(q) + \epsilon \ \langle h - q, g \rangle +  o(\epsilon^2), \label{eq:grad-boost-taylor}
\end{split}
\end{equation}
where $g$ is the functional gradient $\nabla \D(q) := g(\bth)$.
Roughly, for $q$ fixed in \cref{eq:grad-boost-taylor}, we will choose $h$ to minimize the inner product term $\langle h, g \rangle = \langle h, \nabla \D(q) \rangle $. I.e., as in gradient descent, we choose $h$ to ``match the direction'' of $ -\nabla \D(q)$, where $\D(q):=\D(q, p_X)$ is the discrepancy we want to minimize. We provide more detail in \cref{sec:laplace_boost}.

\section{Boosting Variational Inference} \label{sec:bvi}
Boosting variational inference (BVI) applies the framework of the previous section with Kullback-Leibler (KL) divergence as the discrepancy measure.  We first justify the choice of KL, and then present a two-stage multivariate Gaussian mixture
boosting algorithm (\cref{alg:laplacian}) to stochastically decrease the excess discrepancy. In each iteration, our algorithm first chooses $h_t$ via gradient boosting, and then solves for $\alpha_t$ given $h_t$. 

For $\undens$ as in \cref{eq:bayes}, the KL discrepancy is defined as 
\begin{align}
\D(q) = \KL(q \| p_\data) &=  \int q(\bth) \log \frac{q(\bth)}{p_\data(\bth)} \mathrm{d}\bth \label{eq:kl} \\
\nonumber
&= \log p(\data) + \int q(\bth) \log \frac{q(\bth)}{\undens(\bth)} \mathrm{d}\bth.
\end{align}
By dropping the term $\log p(\data)$, which is a constant in $q$, an \emph{effective discrepancy} can be defined as 
\begin{equation} \label{eq:effective_discrepancy}
\eKL(q) = \int q(\bth) \log \frac{q(\bth)}{\undens(\bth)}\mathrm{d}\bth,
\end{equation}
which is the negative value of the evidence lower bound, or ``ELBO''~\citep{blei2016variational}.

The following theorem shows that KL satisfies the greedy boosting conditions we discuss
after \cref{eq:greedy} \citep{zhang2003sequential, rakhlin2006applications}
under assumptions on $q$ that we examine after the proof.
These conditions then also hold for the effective discrepancy $\eKL(q)$. 

\begin{theorem}\label{th:KL-consistency}
Given densities $q_1, q_2$ and true density $p$, KL divergence is a convex functional, i.e., for any $\alpha \in [0, 1]$ satisfying
\begin{equation*}
\KL((1-\alpha) q_1 + \alpha q_2 \| p) \leq (1-\alpha) \KL(q_1 \| p) + \alpha \KL(q_2 \| p).
\end{equation*}
If we further assume that densities are bounded $q_1(\bth), q_2(\bth) \geq a > 0$, and denote the functional gradient of KL at density $q$ as $\nabla \KL(q) = \log q(\bth) - \log p(\bth)$, then the KL divergence is also strongly smooth, i.e., satisfying
\begin{align*}
\lefteqn{ \KL(q_2\|p) - \KL(q_1\|p) } \\
	& \leq \langle \nabla \KL(q_1\|p), q_2 - q_1 \rangle + \frac{1}{a}\|q_2 - q_1\|_2^2.
\end{align*}
\end{theorem}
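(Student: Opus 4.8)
The plan is to treat the two assertions separately and, crucially, to reduce the strong-smoothness bound to the elementary inequality $\log t \le t-1$. For the convexity claim it suffices to argue pointwise: with $p$ held fixed, the scalar function $t \mapsto t\log(t/p(\bth))$ has second derivative $1/t>0$ on $(0,\infty)$, hence is convex; evaluating it at $(1-\alpha)q_1(\bth)+\alpha q_2(\bth)$ and integrating over $\bth$ yields exactly $\KL((1-\alpha)q_1+\alpha q_2\|p)\le(1-\alpha)\KL(q_1\|p)+\alpha\KL(q_2\|p)$. No lower bound on the densities is needed for this part.

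For the smoothness bound I would first confirm that the stated functional gradient is the right one: differentiating $\KL(q_1+\epsilon(q_2-q_1)\,\|\,p)$ at $\epsilon=0$ gives the linear term $\langle q_2-q_1,\ \log q_1-\log p+1\rangle$, and the constant $1$ drops out because $q_1,q_2$ are densities, so $\langle 1,q_2-q_1\rangle=0$; this matches $\nabla\KL(q_1\|p)=\log q_1-\log p$. Next comes the key algebraic identity: expanding the entropy and cross terms and again using $\int q_1=\int q_2=1$, one gets the exact (Bregman-divergence) relation
\[
\KL(q_2\|p)-\KL(q_1\|p)-\langle \nabla\KL(q_1\|p),\ q_2-q_1\rangle \;=\; \KL(q_2\|q_1).
\]
So the left-hand side of the inequality we must prove is precisely $\KL(q_2\|q_1)$, and the whole theorem reduces to showing $\KL(q_2\|q_1)\le\frac1a\|q_2-q_1\|_2^2$ whenever $q_1\ge a$.

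To finish, I would bound $\KL(q_2\|q_1)$ by the $\chi^2$-divergence. Applying $\log t\le t-1$ with $t=q_2/q_1$ inside $\int q_2\log(q_2/q_1)\,\mathrm{d}\bth$ gives $\KL(q_2\|q_1)\le\int q_2(q_2/q_1-1)\,\mathrm{d}\bth=\int q_2^2/q_1\,\mathrm{d}\bth-1=\int (q_2-q_1)^2/q_1\,\mathrm{d}\bth$, where the last equality once more uses that both densities integrate to $1$. Since $q_1(\bth)\ge a$ pointwise, $\int (q_2-q_1)^2/q_1\,\mathrm{d}\bth\le\frac1a\int(q_2-q_1)^2\,\mathrm{d}\bth=\frac1a\|q_2-q_1\|_2^2$, which is the claim.

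The calculus here is routine; the only real content is noticing that the gap in the smoothness inequality collapses exactly to $\KL(q_2\|q_1)$ and then routing it through the $\chi^2$-divergence via $\log t\le t-1$, so I expect the main obstacle to be bookkeeping rather than ideas. The one place to be careful is integrability: for the term-by-term rearrangements and the first-order expansion to be legitimate we need $\int q_i|\log q_i|$, $\int q_i|\log p|$ and $\|q_2-q_1\|_2^2$ all finite --- exactly the regularity conditions on the $q_t$ alluded to after the statement --- and we need $q_1,q_2$ to share support (automatic here from $q_i\ge a>0$) so that $\KL(q_2\|q_1)$ is finite and the pointwise manipulations are valid.
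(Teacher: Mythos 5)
Your proof is correct and rests on essentially the same ingredients as the paper's: both parts come down to $\log t \le t-1$ (the paper writes it as $\log(1+x)\le x$), and your final bound $\KL(q_2\|q_1)\le\int (q_2-q_1)^2/q_1\,\mathrm{d}\bth\le \frac{1}{a}\|q_2-q_1\|_2^2$ is exactly the quantity $\langle h, h/q_1\rangle$ that appears in the paper's expansion with $h=q_2-q_1$. The only differences are organizational --- you prove convexity via pointwise convexity of $t\mapsto t\log(t/p)$ rather than the paper's direct $\log(1+x)\le x$ manipulation, and you make the Bregman identity (gap equals $\KL(q_2\|q_1)$) explicit before bounding it by the $\chi^2$-divergence --- which arguably makes the argument more transparent but is not a new idea.
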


\begin{proof}
For any densities $q_1, q_2$ and $\alpha \in [0,1]$, we have the convexity from
\begin{align*}
&\quad \KL((1-\alpha) q_1 + \alpha q_2 \| p) \\
& = (1 - \alpha) \int q_1 \log \frac{(1-\alpha) q_1 + \alpha q_2}{p} \mathrm{d} \bth + \alpha \int q_2 \log \frac{(1-\alpha) q_1 + \alpha q_2}{p} \mathrm{d} \bth \\
&= (1-\alpha) \int q_1 \log \frac{q_1 [1 + \alpha (q_2 / q_1 - 1) ]}{p} \mathrm{d} \bth + \alpha \int q_2 \log \frac{q_2 [1 + (1-\alpha) (q_1/q_2 - 1) ]}{p} \mathrm{d} \bth \\
&\leq (1-\alpha) \KL(q_1 \| p) + \alpha \KL(q_2 \| p),
\end{align*}
where we have used $\log (1 + x) \leq x$ and $\int (q_2 - q_1) \mathrm{d} \bth = 0$. 

Let $h = q_2 - q_1$; then we have $\int h(\bth) \mathrm{d}\bth = \int (q_2 - q_1)\mathrm{d}\bth = 0$. Again, using the inequality $\log (1 + x) \leq x$, we have the strong smoothness from 
\begin{align*}
&\quad { \KL(q_1 + h\|p) - \KL(q_1\|p) } \\
&= \langle q_1 + h, \log(q_1 + h) \rangle - \langle q_1, \log q_1 \rangle - \langle h, \log p \rangle\\
&\leq \langle q_1 + h, \frac{h}{q_1} + \log q_1 \rangle - \langle q_1, \log q_1 \rangle - \langle h, \log p \rangle\\
&= \langle h, \log q_1\rangle + \langle h, \frac{h}{q_1}\rangle - \langle h, \log p \rangle\\
&\leq \langle \log q_1 - \log p, h\rangle + \frac{1}{a}\|h\|_2^2,
\end{align*}
where in the last step we used that $q_1, q_2 \geq a > 0$. 
\end{proof}

Here we assume the densities $q_1, q_2$ are lower-bounded by $a > 0$. While this requirement is unrealistic for densities with unbounded support, it is not so onerous if the parameter space is bounded. Even when the parameter space is not intrinsically bounded, we are often most interested in a bounded subset of the parameter space. E.g., we might focus a compact set $\bm{\Theta} \subset \mathbb{R}^d$ such that $\int_{\bm{\Theta}} p_x(\bth) \mathrm{d} \bth = (1 - \epsilon)$ for a small $\epsilon > 0$. 

\subsection{Setting $\alpha_t$ with SGD} \label{sec:weight_boost}
First we show how to find $\alpha_t$ given a fixed $h_t$. Then in \cref{sec:laplace_boost} below, we will show how to find $h_t$. For now, we can take derivatives of effective discrepancy with respect to $\alpha_t$ to find
\begin{align}
\nonumber \frac{\partial \eKL(q_t)}{\partial \alpha_t} &= \int (h_t - q_{t-1}) \log \frac{(1-\alpha_t)q_{t-1} + \alpha_t h_t}{\undens} \mathrm{d}\bth \\
& = \E_{\bth \sim h_t} \gamma_{\alpha_t}(\bth) - \E_{\bth \sim q_{t-1}} \gamma_{\alpha_t}(\bth), \label{eqs:1st-order-deriv} \\
\frac{\partial^2 \eKL(q_t)}{\partial \alpha_t^2} &= \int \frac{(h_t - q_{t-1})^2}{(1-\alpha_t)q_{t-1} + \alpha_t h_t} \mathrm{d}\bth \geq 0, \label{eqs:2nd-order-deriv}
\end{align} 
where 
\begin{equation}
\gamma_{\alpha_t}(\bth) := \log \frac{(1-\alpha_t)q_{t-1}(\bth) + \alpha_t h_t(\bth)}{\undens(\bth)}. \label{eqs:gamma}
\end{equation}

By \cref{eqs:2nd-order-deriv}, $\eKL$ is a {\em convex} function of $\alpha_t$. 
Using \cref{eqs:1st-order-deriv}, we can estimate the gradient with Monte Carlo by drawing samples from $h_t$ and $q_{t-1}$. Then we use stochastic gradient descent (SGD) to solve for $\alpha_t$, as presented below in \Cref{alg:sgd}.

Since the gradient is stochastically estimated instead of exact, to ensure convergence we use a decaying sequence of step sizes $(b/k)$ in \cref{alg:sgd} that satisfy the Robbins-Monro conditions~\citep{robbins1951stochastic}. Note that we use only $1^{\text{st}}$-order methods here because unlike the gradient, estimating the $2^{\text{nd}}$ derivative in \cref{eqs:2nd-order-deriv} involves computation no longer in the logarithmic scale and is hence numerically unstable. 

\begin{algorithm}[H]
\caption{SGD for solving $\alpha_t$} \label{alg:sgd}
\begin{algorithmic}
\Require{current approximation $q_{t-1}(\bth)$, new component $h_t(\bth)$, product of prior and likelihood $\undens(\bth)$}, Monte Carlo sample size $n$, initial step size $b > 0$, and tolerance~$\epsilon > 0$
\State Initialize $ k \leftarrow 0$, $ \alpha_t^{(0)} \leftarrow 0 $
\Do
	\State Independently draw $ \{\bth^{(h)}_i\} \sim h_{t} $ and $ \{\bth^{(q)}_i\} \sim q_{t-1} $ for $ i=1,\cdots,n $
	\State Compute $\hat{\mathcal{D}}^{\prime}_{\text{KL}} = \frac{1}{n} \sum_i (\gamma_{\alpha_t}(\bth_i^{(h)}) - \gamma_{\alpha_t}(\bth_i^{(q)}) )$ by \cref{eqs:gamma}
	\State $ k \leftarrow k + 1$
	\State $\alpha_t^{(k)} \leftarrow \alpha_t^{(k-1)} - (b / k) \hat{\mathcal{D}}^{\prime}_{\text{KL}}$
	\State $\alpha_t^{(k)} \leftarrow \max(\min(\alpha_t^{(k)}, 1),0)$
\doWhile{$ | \alpha_t^{(k)} - \alpha_t^{(k-1)}| < \epsilon $}
\State \Return{$\alpha_t^{(k)}$}
\end{algorithmic}
\end{algorithm}

\subsection{Setting $h_t$ with Laplacian Gradient Boosting} \label{sec:laplace_boost}
It is difficult to solve for the best $h_{\bm{\phi}^{\ast}}$ in the optimization problem within \cref{eq:greedy}, so instead we use gradient boosting to identify a new component $h_t$ that will be good enough, in the sense of \cref{eq:greedy}.
As we suggest after \cref{eq:grad-boost-taylor}, we propose (roughly) to choose $h_t$ in the ``direction'' of the negative
functional gradient. We first take the Taylor expansion of the effective discrepancy $\eKL(q_t)$ around $\alpha_t \searrow 0$
\begin{equation}
\eKL(q_t) = \eKL(q_{t-1}) + \alpha_t \langle h_t, \log ({q_{t-1}} / {\undens}) \rangle  - \alpha_t \langle q_{t-1}, \log ({q_{t-1}} / {\undens}) \rangle + o(\alpha_t^2).
\end{equation}
When the new component contributes small weight $\alpha_t$, this result suggests that we might choose $h_t$ to minimize $\langle h_t, \log ({q_{t-1}} / {\undens}) \rangle$, where $\log ({q_{t-1}}/{\undens})$ is the \emph{functional gradient} $\nabla \eKL(q_{t-1})$.

However, a direct minimization of the inner product is ill-posed since $h_t$ will \emph{degenerate} to a point mass at the minimum of functional gradient. Instead we consider the regularized minimization
\begin{equation}
\hat{h}_t = \argmin_{h = h_{\phi}} \quad \langle h, - \log (f / q_{t-1}) \rangle + \frac{\lambda}{2} \log \| h \|_2^2
\end{equation}
with some $\lambda > 0$. The problem can be rewritten as
\begin{equation}
\hat{h}_t = \argmax_{h = h_{\phi}} \mathbb{E}_{\bth \sim h}  \log \frac{\undens(\bth)}{q_{t-1}(\bth)} - \lambda /2 \cdot \log \| h \|_2^2. \label{eqs:obj-grad-boost}
\end{equation}
For a Gaussian family with $h_{\phi}(\bth) = \mathcal{N}_{\bm{\mu}, \bm{\Sigma}}(\bth)$, we have
\begin{equation}
\hat{\bm{\mu}}_t, \hat{\bm{\Sigma}}_t = \argmax_{\bm{\mu}, \bm{\Sigma}} \mathbb{E}_{\bth \sim \mathcal{N}_{\bm{\mu},\bm{\Sigma}}} \log \frac{\undens(\bth)}{q_{t-1}(\bth)} + \lambda /4 \cdot\log |\bm{\Sigma}|, \label{eqs:obj-grad-boost-gaussian}
\end{equation}
where the log determinant term prevents degeneracy.

\cref{eqs:obj-grad-boost-gaussian}, though, is still a non-convex problem and requires approximation.
Although methods based on SGD and the reparameterization trick~\citep{kingma2013auto} can be applied, they are out of the scope of this article. 
Instead, we seek to develop a simple and efficient procedure based on off-the-shelf optimization routines. 
For this purpose, we interpret \cref{eqs:obj-grad-boost-gaussian} as follows. 

We say that $\log (\undens / q_{t-1})$ is the {\em residual log-density} corresponding to posterior approximation $q_{t-1}$. Ideally, we would have $q_{t-1} \propto \undens$, and the residual would be a flat plane. In general, though, the residual has peaks where posterior density is underestimated and basins where the posterior density is overestimated. Intuitively, the new component $h_t$ should be introduced to ``patch'' the region where density is currently underestimated. This idea is illustrated in \Cref{fig:schematic-algo-2}.

\begin{figure}[!htbp]
\begin{center}
\centerline{\includegraphics[width=0.90\columnwidth]{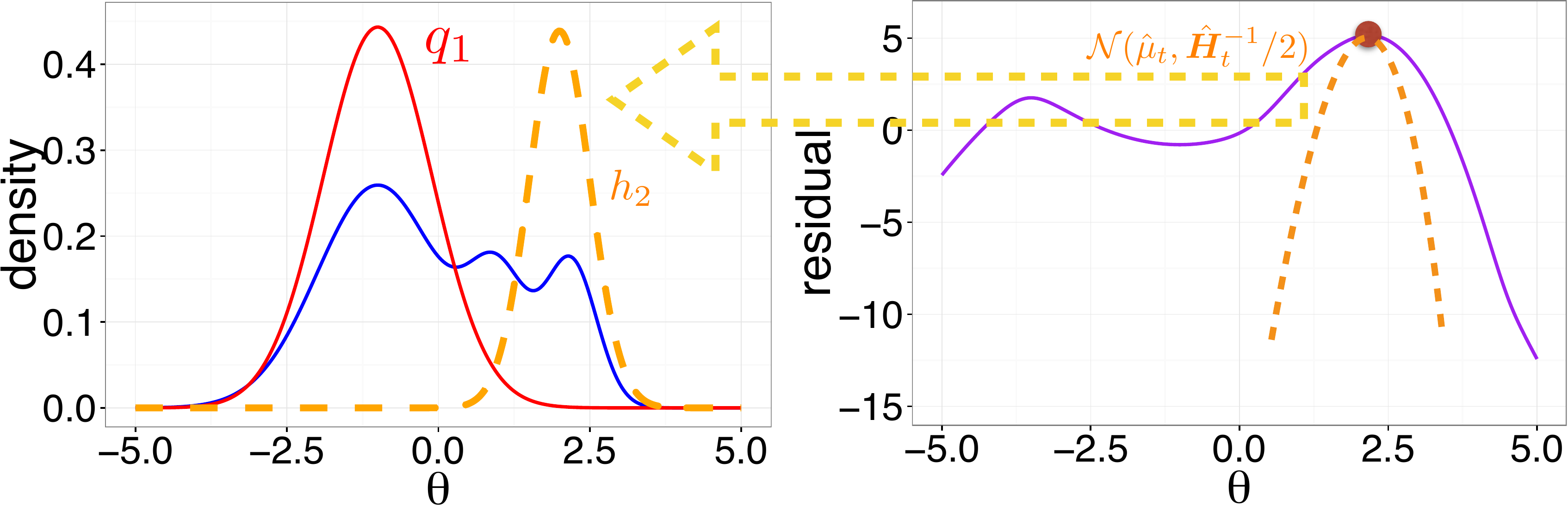}}
\caption{\Cref{alg:laplacian} identifies new component $h_2$ by finding a (local) peak of the log residual and its corresponding Hessian.}
\label{fig:schematic-algo-2}
\end{center}
\end{figure}

We therefore propose the following efficient heuristic (\cref{alg:laplacian}) to approximately optimize \cref{eqs:obj-grad-boost-gaussian}. We start by approximately decomposing the residual $\log (\undens / q_{t-1})$ into a constant plus a quadratic peak 
\begin{equation*}
\log (f(\bth) / q_{t-1}(\bth)) \approx - \frac{1}{2} (\bth - \bm\eta)^{T} \bm{H} (\bth - \bm{\eta}) + \text{const}.
\end{equation*}
For instance, we can choose $\bm\eta$ and $\bm{H}$ according to \emph{Laplace approximation} of $\undens / q_{t-1}$; that is, we match the location of a peak of $\log(f(\bth) / q_{t-1}(\bth))$ with $\eta$ and match the Hessian
at that location with $H$.
Then, \cref{eqs:obj-grad-boost-gaussian} reduces to
\begin{equation*}
\min_{\bm{\mu}, \bm{\Sigma}} -\frac{1}{2} \mathbb{E}_{\bth \sim \mathcal{N}(\bm{\mu}, \bm{\Sigma})} (\bth - \bm{\eta})^T \bm{H} (\bth - \bm{\eta}) + \frac{\lambda}{4} \log |\bm{\Sigma}|, 
\end{equation*}
which is identical to
\begin{equation}
\min_{\bm{\mu}, \bm{\Sigma}} -\frac{1}{2}\text{Tr}\left(\bm{H} (\bm{\mu}\bm{\mu}^T +\bm{\Sigma} - 2 \bm{\mu} \bm{\eta}^T) \right) + \frac{\lambda}{4} \log |\bm{\Sigma}|.
\end{equation}
This last equation describes a \emph{convex} problem, and we can write the solution in closed form:
\begin{equation}
\bm{\mu}^{\ast} = \bm{\eta}, \quad \bm{\Sigma}^{\ast} = \frac{\lambda}{2} \bm{H}^{-1}.
\end{equation}
For the Laplace approximation, the (local) optimum $\bm{\eta}$ and Hessian $\bm{H}$ can be calculated numerically.
This particular choice for $\bm{\eta}$ and $\bm{H}$, together with the resulting $h_t$, is described in
\Cref{alg:laplacian} and illustrated in \Cref{fig:schematic-algo-2}. 

\begin{algorithm}[H]
\caption{Laplacian Gradient Boosting for Gaussian Base Family $\mathcal{N}_{\bm{\mu}, \bm{\Sigma}}$} \label{alg:laplacian}
\begin{algorithmic}
\Require{evaluable product density of prior and likelihood $\undens(\bth)$ for $\bth \in \mathbb{R}^d$, number of iterations $T$}
\State Start with some initial approximation $ q_1= \mathcal{N}_{\bm{\mu}_1, \bm{\Sigma}_1}$ (e.g., $q_1 = \mathcal{N}_{\bm{0}, c\bm{I}}$)
\For{$t=2$ to $T$}
	\State Draw $ \bth_0 \sim q_{t-1}$
	\State $ \hat{\bm{\mu}}_t \leftarrow \argmin_{\bth} \log(q_{t-1}(\bth) / \undens(\bth)) $ with an optimization routine initialized at $ \bth_0$
	\State $ \bm{H}_t \leftarrow  \text{Hessian}_{\bth = \hat{\bm{\mu}}_t} \log(q_{t-1}(\bth) / \undens(\bth))$ with numerical approximation (e.g., finite difference)
	\State $ \hat{\bm{\Sigma}}_t \leftarrow \bm{H}_t^{-1} / 2$ 
	\State Let $ \hat{h}_t(\bth) = \mathcal{N}(\bth| \hat{\bm{\mu}}_t, \hat{\bm{\Sigma}}_t) $ be the new component
	\State $ \hat{\alpha}_t \leftarrow \argmin_{\alpha_t} \eKL((1-\alpha_t)q_{t-1} + \alpha_t  \hat{h}_t) $ with \cref{alg:sgd} 
	\State $ q_t \leftarrow (1-\hat{\alpha}_t) q_{t-1} + \hat{\alpha}_t \hat{h}_t $.
\EndFor
\State \Return $q_t$
\end{algorithmic}
\end{algorithm}

\subsubsection{Choosing $\lambda = 1$}
We now provide a justification for this choice based on a similar treatment of the KL divergence in the alternate direction, namely $\KL(p \| q)$, defined as 
\begin{align*}
{ \KL(p_{\data} \| q_t) } &= \int p_{\data}(\bth) \log \frac{p_{\data}(\bth)}{q_t(\bth)} \mathrm{d} \bth \\
&= \int p_{\data}(\bth) \log p_{\data}(\bth) \mathrm{d} \bth - c \int f(\bth) \log q_t(\bth) \mathrm{d} \bth,
\end{align*}
where $c = 1 / p(X)$ is the normalizing constant. 
Fixing $p_{\data}$, we use the shorthand $\D_{\textit{kl}}(q) := \KL (p_{\data} \| q)$, where
we employ the lower-case $\textit{kl}$ to distinguish this discrepancy from the KL divergence in the other direction (cf.\ \cref{eq:kl,eq:effective_discrepancy})
By dropping constants that do not involve $q_t$ as in \cref{eq:effective_discrepancy}, we can define an effective discrepancy, 
\begin{equation}
\tilde{\D}_{\textit{kl}}(q) = - \int f(\bth) \log q(\bth) \mathrm{d} \bth.
\end{equation}

We now consider a gradient boosting procedure for $\D_{\textit{kl}}$. 
Again, from a Taylor expansion around $\alpha_t \searrow 0$, we have
\begin{equation*}
\begin{split}
\tilde{\D}_{\textit{kl}}(q_t) &= \tilde{\D}_{\textit{kl}}(q_{t-1}) - \alpha_t \langle f(\bth) / q_{t-1}(\bth), h_t(\bth) \rangle \\
&\quad + \alpha_t \int f(\bth) \mathrm{d} \bth + o(\alpha_t^2). 
\end{split}
\end{equation*}
Here, we should match $h_t$ to the direction of $-\nabla  \tilde{D}_{\textit{kl}} = f / q_{t-1}$. To avoid degeneracy, we match in the $l_2$ norm
\begin{equation}
\min_{h = h_\phi, \lambda^{\prime} >0} \| \lambda^{\prime} h - f / q_{t-1} \|_2^2,
\end{equation}
as suggested by \citet{friedman2001greedy}. By plugging in the optimal value for $\lambda^{\prime}$, the objective is equivalent to 
\begin{equation}
\max_{h = h_\phi} \log \E_{\bth \sim h} \left (f(\bth) / q_{t-1}(\bth)\right) - 1/2 \cdot \log \|h\|_2^2,
\end{equation}
which, by Jensen's inequality, lower bounds the objective in \cref{eqs:obj-grad-boost} when $\lambda = 1$. Heuristically, we might expect that this bound \emph{iteratively tightens}; as $q_{t-1}(\bth) $ better approximates $p_{\data}(\bth)$, $f(\bth) / q_{t-1}(\bth) \propto p_{\data}(\bth) / q_{t-1}(\bth)$ approaches a constant function. Thus, referencing Jensen's inequality, we might expect $\E_{\bth \sim h} \log \left (f(\bth) / q_{t-1}(\bth)\right)$ to more closely approximate $\log \E_{\bth \sim h} \left (f(\bth) / q_{t-1}(\bth)\right)$.

\subsubsection{Stabilizing the Log-Residual} 
In \Cref{alg:laplacian}, we maximize the log-residual $ \log (f(\bth) / q_{t-1}(\bth))$ to find a local peak. If the exact posterior has a strictly heavier tail than any Gaussian random variable, we have that $\log ( f(\bth) / q_{t-1}(\bth) ) \rightarrow +\infty$ as $ \|\bth\| \rightarrow \infty$, and the argmax diverges. In order to ensure the Laplace approximation mean is finite, and therefore of practical use, in \cref{alg:laplacian}, we manually stabilize the tails of log-residual. In particular we add a small constant $a$ (e.g., $a = e^{-10}$) to the both $f$ and $q_{t-1}$, so that 
\begin{equation}
\log ((f(\bth) + a) / (q_{t-1}(\bth) + a)) \rightarrow 0 \quad \textrm{as } \|\bth\| \rightarrow \infty.
\end{equation}
The log-sum-exp trick is useful for implementing this modification.

\subsubsection{Scaling to High Dimensions}
When the dimension $d$ of the parameter space is high, the calculation of the covariance in \Cref{alg:laplacian} can be costly. In particular, the cost of estimating the Hessian with finite difference is $O(d^2)$, and another $O(d^3)$ for inversion or Cholesky decomposition. When $d$ is large, one option is to consider a diagonal approximation to the Hessian (and hence covariance). While this choice reduces the computation cost of a single iteration to $O(d)$, the convergence may be slower in terms of number of iterations.

\section{Experiments} \label{sec:experiments}
We refer to \cref{alg:laplacian} as \emph{boosting variational inference} (BVI) and evaluate its performance
on a variety of different exact posterior distributions---including distributions arising from both simulated and real data sets.
For comparison, we run
\emph{automatic differentiation variational inference} (ADVI) \citep{kucukelbir2015automatic},
which effectively performs mean-field variational inference (MFVI)
but does not require any conditional conjugacy properties in the generative model.
We use the ADVI implementation provided in \emph{Stan}~\citep{kucukelbir2015automatic, carpenter2016stan}.
In all experiments, we choose $n=100$ particles to evaluate stochastic gradients both in BVI and ADVI.
We choose a tolerance of $\epsilon = 10^{-4}$ in \Cref{alg:sgd}.
We use L-BFGS \citep{byrd1995limited} for optimization and the finite difference method when estimating the Hessian in \Cref{alg:laplacian}.

\subsection{Toy Examples}
\begin{figure}[!htbp]
\centering
\includegraphics[width=0.99\textwidth]{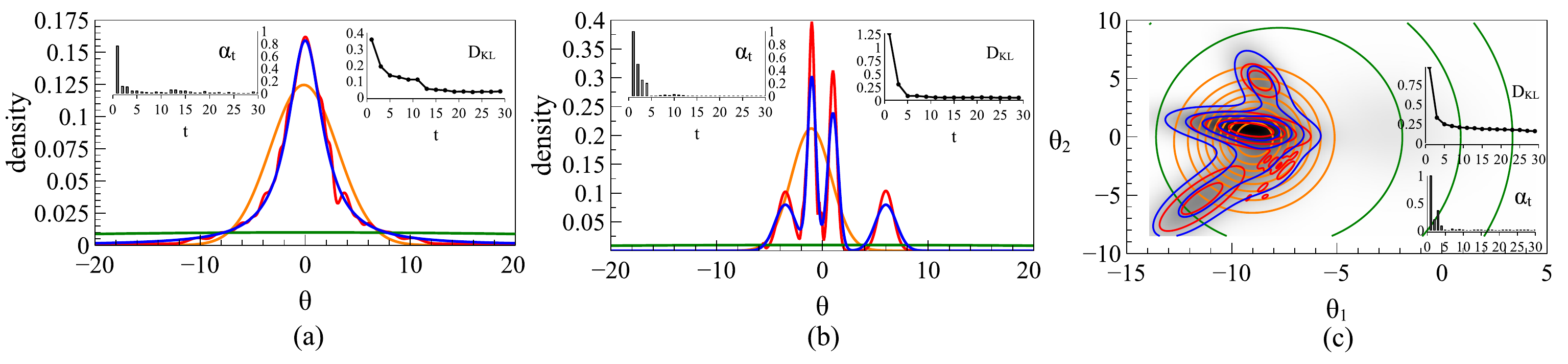}
\caption{Toy examples: (a) (heavy-tailed) Cauchy distribution (b) a mixture of four univariate Gaussians and (c) a mixture of five bivariate Gaussians with random means and covariances. Curves/contours are colored by blue (true), red (BVI), green (initial $q_1$ in BVI) and orange (ADVI). Sequence of $(\alpha_t)_{t}$ and Monte Carlo estimates of $(\KL(q_t))_{t}$ are plotted against iteration in subplots.}
\label{fig:toys}
\end{figure}

First, in \Cref{fig:toys}, we consider a number of simple choices for the exact target distribution $\post(\param)$ (plotted in blue) that illustrate the different behavior of ADVI (orange) and BVI (red).
In particular, the target distribution in (a) of \Cref{fig:toys} is a Cauchy density $\post(\param) \propto \frac{1}{1 + (\param/2)^2}$
with heavy tails. In \Cref{fig:toys}(b), we see that BVI is able to capture the multimodality of the target distribution, which is a mixture of univariate Gaussians with different locations and scales. 
And in \Cref{fig:toys}(c), the target distribution is a mixture of five two-dimensional Gaussians with different means and covariances.
In each case, we initialize BVI with a Gaussian with very large (co)variance (plotted in green), and then run BVI for 30 iterations. The sequences $(\alpha_t)_{t}$ and $(\KL(q_t \| p))_{t}$ are shown in subplots.

\begin{figure}[!htbp]
\centering
\includegraphics[width=0.7\textwidth]{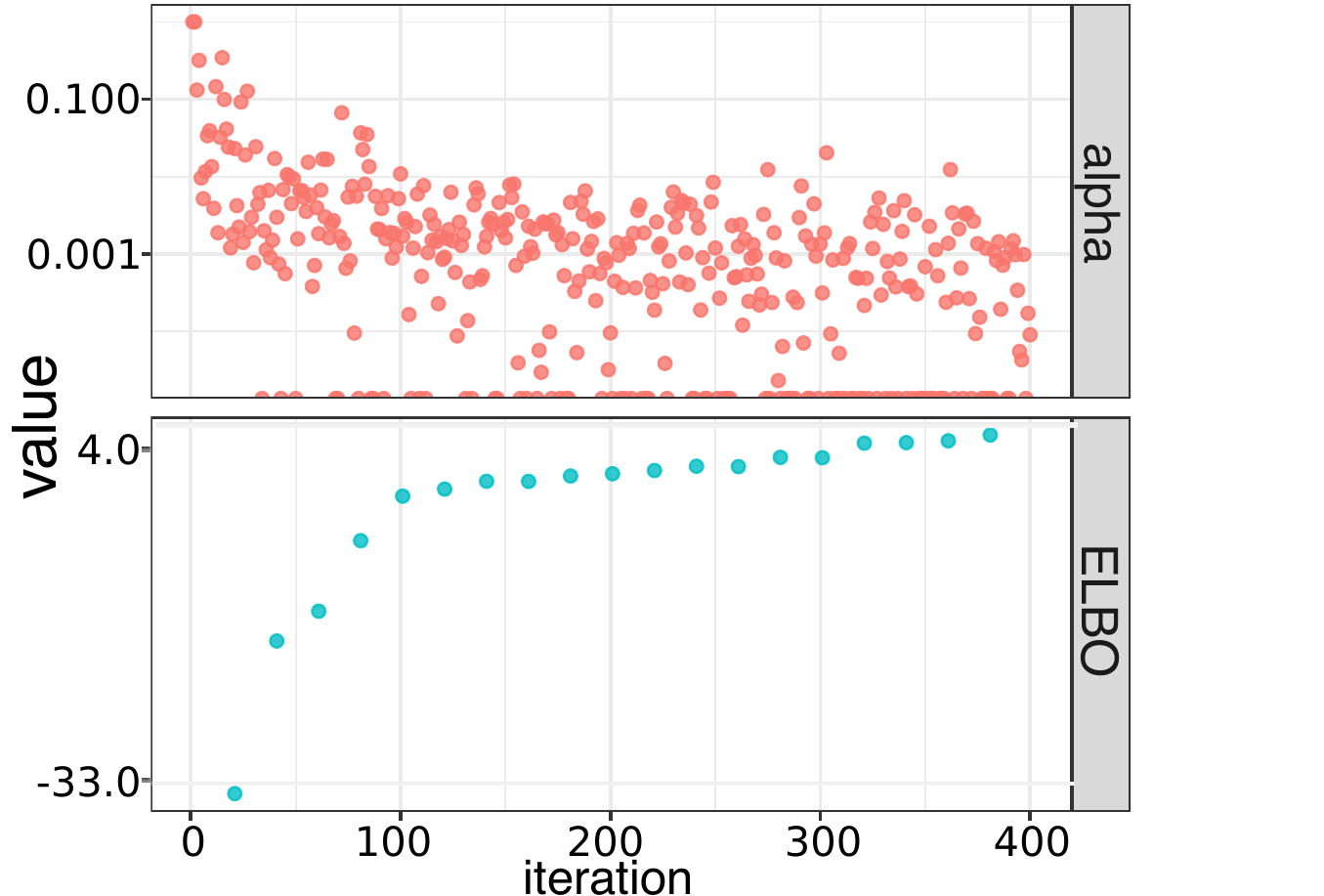}
\caption{Sequence of $(\alpha_t)_t$ and Monte Carlo ELBO estimates $(-\eKL(q_t))_t$ against the number of iterations, for estimating the ``banana''' distribution in \cref{sec:banana}. The $y$-axis is in logarithmic scale and the points of $\alpha_t$ at the bottom are values near zero.}
\label{fig:banana-trace}
\end{figure}

\begin{figure}[!htbp]
\centering
\includegraphics[width=0.7\textwidth]{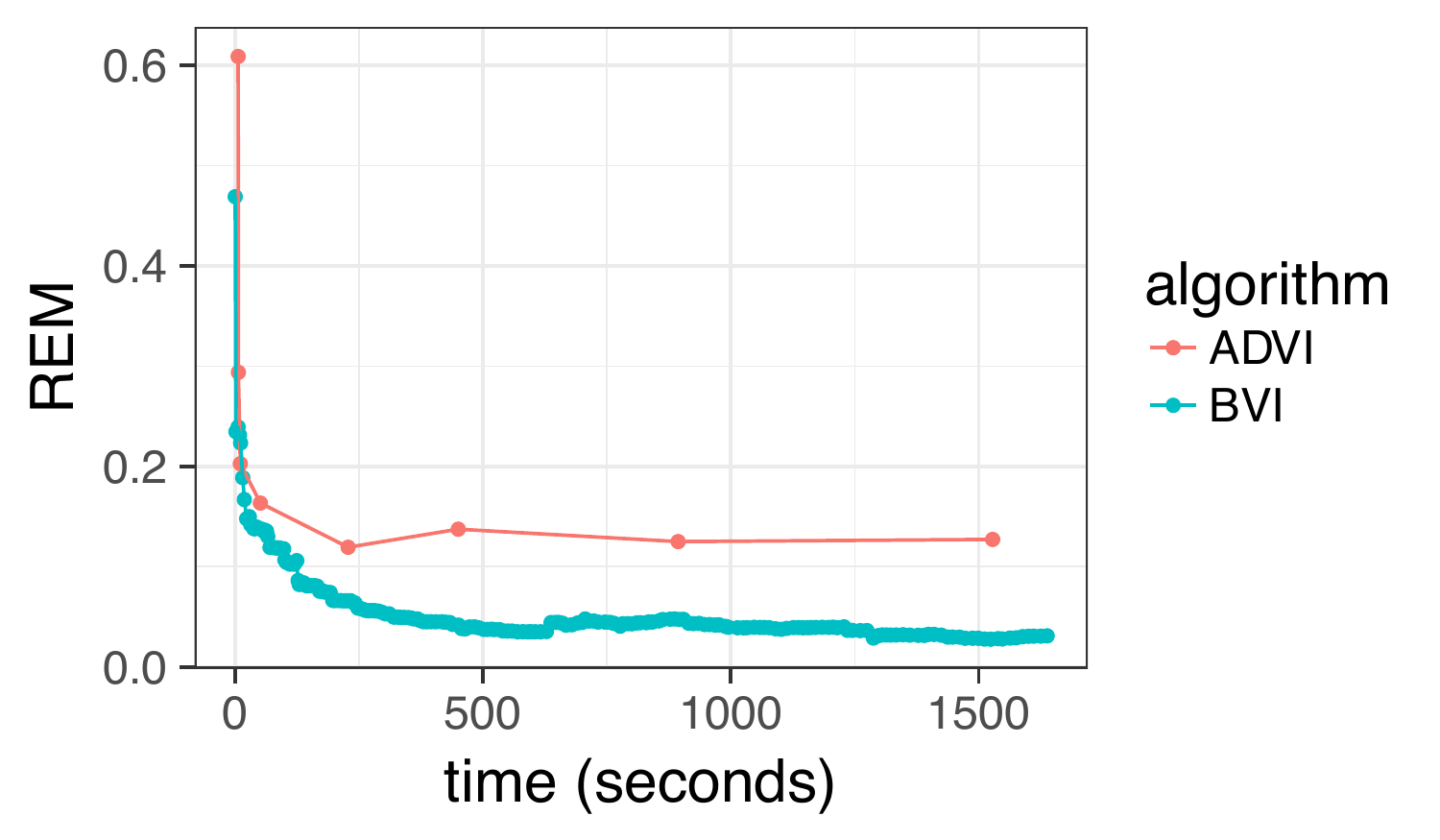}
\caption{Comparison of accuracy versus running time for different algorithms in sensor localization (\cref{sec:sensor}). Results are obtained by averaging 5 runs of each algorithm.}
\label{fig:sensor-timing}
\end{figure}

\subsection{Banana Distribution} \label{sec:banana}

\begin{figure}[!htbp]
\centering
\includegraphics[width=1.1\textwidth]{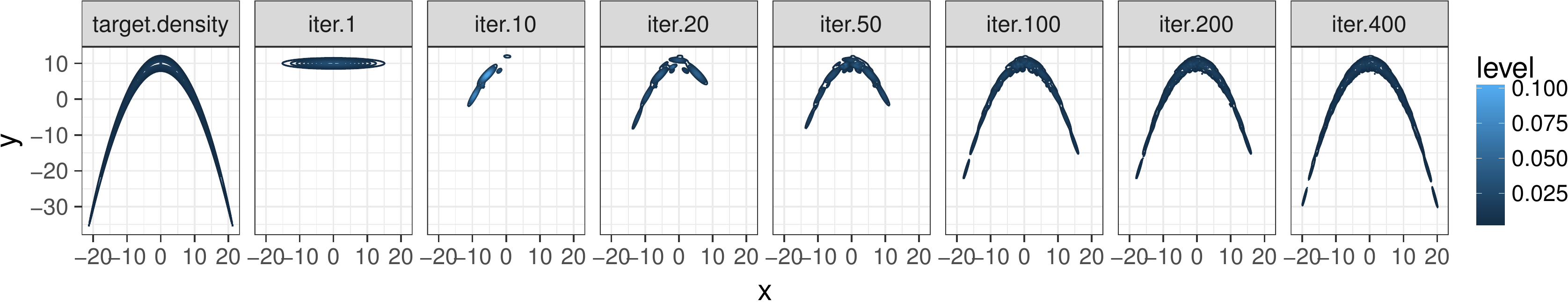}
\caption{Contour plots for the estimated posterior density of the ``banana'' distribution, as the number of iterations of BVI grows $(t=1, 10, 20, 50, 100, 200, 400)$. The true target density is shown in the leftmost panel.}
\label{fig:banana-density}
\end{figure}

We next highlight the ability of BVI to trade-off longer running times for more approximation accuracy on the fly as the number of iterations increase. We demonstrate this property on the ``banana'' distribution, which is widely-used for studying the performance of MCMC algorithms~\citep{haario1999adaptive, haario2001adaptive, roberts2009examples}. The distribution has the following density on $\mathbb{R}^2$:
\begin{equation*}
\post(\param_1, \param_2) \propto \exp(-\param_1^2/200 - (\param_2 + B \param_1^2 - 100 B)^2 / 2),
\end{equation*}
where the constant $B$ controls the curvature, and we choose $B = 0.1$. This target density is shown in the leftmost panel of \Cref{fig:banana-density}.

We initialize \Cref{alg:laplacian} with a standard Gaussian distribution and run for $T=400$ iterations. \Cref{fig:banana-density} shows the intermediate estimates of posterior density as the number of iterations grows. We can see how early, crude approximations are quickly made available, but then the approximation is refined over time. The sequence of weights $(\alpha_t)_t$ and Monte Carlo estimates of ELBO $(-\eKL(q_t))_t$ are shown in \Cref{fig:banana-trace}.

\subsection{Sensor Network Localization} \label{sec:sensor}
\begin{figure}[!htbp]
\includegraphics[width=0.9\textwidth]{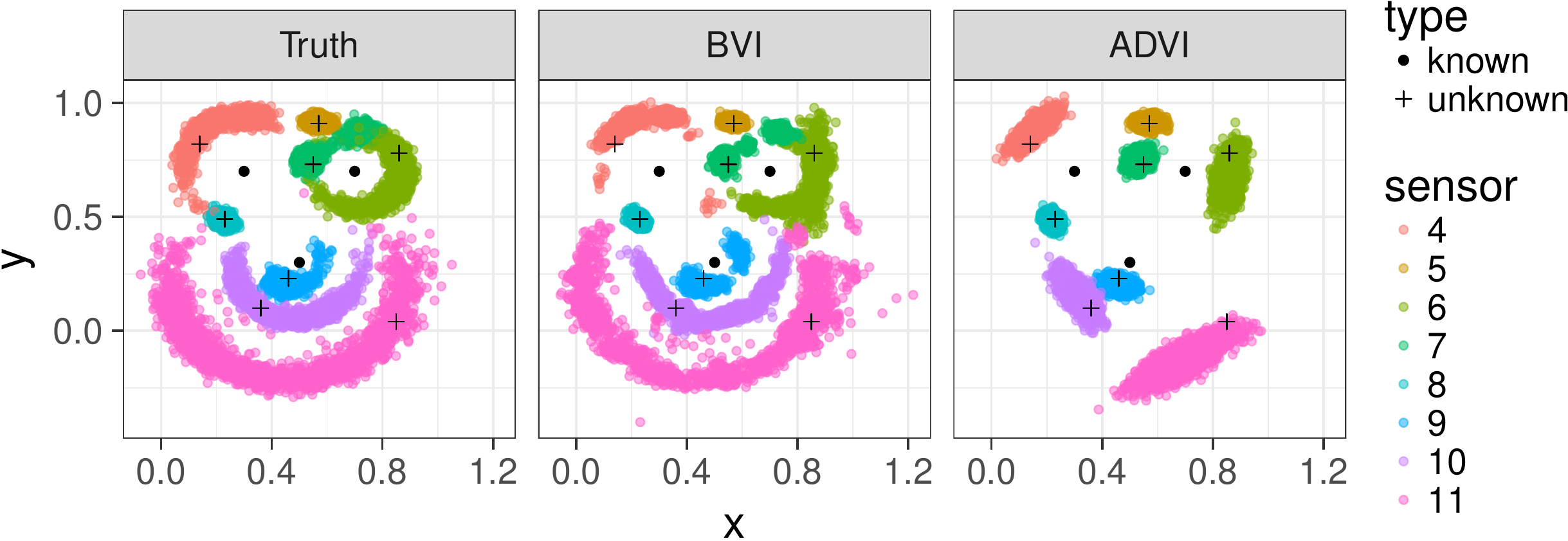}
\caption{Posterior samples of the sensor localization problem in \cref{sec:sensor} as inferred from NUTS (considered truth), BVI and ADVI. The circles (location known) and crosses (location unknown) mark the true coordinates of the sensors.}
\label{fig:sensor-posterior}
\end{figure}

In this experiment, we look at a sensor network localization problem that has been considered by \citet{ihler2005nonparametric, ahn2013distributed} and \citet{lan2014wormhole}. We have $N$ sensors located on a 2D plane, with coordinates $\bm{\theta}_i = (x_i, y_i)$ for $i=1,\cdots,N$. For each pair of sensors $(i, j)$, their distance is observed according to a Bernoulli distribution with the given probability:
\begin{equation}
Z_{i j} | \bm{\theta_i}, \bm{\theta_j} \sim \text{Ber}(\exp(-\| \bm{\theta}_i - \bm{\theta}_j \|_2^2 / (2 R^2))).
\end{equation}
Let $Y_{i j}$ denote the distance measured between sensors $i$ and $j$. When $Z_{i j} = 1$ (observed), the measured distance is the actual distance polluted by a Gaussian noise
\begin{equation}
Y_{i j} | Z_{i j} = 1, \bm{\theta}_i, \bm{\theta}_j \sim \mathcal{N}(\| \bm{\theta}_i - \bm{\theta}_j \|_2, \sigma^2).
\end{equation}
Otherwise, we assume $Y_{i j} = 0$ when $Z_{i j} = 0$ (unobserved).

Now, following the setup in \citet{ahn2013distributed} and \citet{lan2014wormhole}, we set $N=11$, $R = 0.3$ and $\sigma=0.02$. To avoid ambiguity of localization arising from translation and rotation, we assume $\bm{\theta}_i$ is known for $i = 1,2,3$. Hence, we are interested in  localizing the remaining 8 sensors. In particular, we want to infer the posterior distribution of $\{\bm{\theta}_i\}_{i=4}^{11}$ (located at crosses in \cref{fig:sensor-posterior}) given $\{Y_{i j}\}_{i \neq j}$ and $\{\bm{\theta}_i\}_{i=1}^3$ (located at circles in \cref{fig:sensor-posterior}). The resulting posterior distribution is a highly non-Gaussian and multimodal one on $\mathbb{R}^d$ with $d = 16$.

In \Cref{fig:sensor-posterior}, we compare the posterior samples as inferred by BVI and ADVI (full-rank). Both algorithms are given enough running time (200 iterations for BVI, 100,000 iterations for ADVI implemented in \emph{Stan}~\citep{kucukelbir2015automatic, carpenter2016stan}) to reach convergence. For reference, we treat the result from a converged NUTS sampler~\citep{hoffman2014no} as ground truth. In \Cref{fig:sensor-posterior}, we see that, unlike ADVI, our algorithm is able to reliably capture the complex shape in the posterior. 

We investigate the trade-off between time and accuracy for both BVI and ADVI. To measure the accuracy of posterior estimation, we use the relative error of the estimated mean (REM) as a summary index \citep{ahn2013distributed, lan2014wormhole}. REM measures the relative error of approximating the posterior mean in $l_1$ norm,
\begin{equation}
\text{REM}(q, p_X) = \frac{\| \E_{q} \bth - \E_{p_X} \bth\|_1}{\| \E_{p_X} \bth \|_1},
\end{equation}
and can be computed from sample averages. A comparison of REM versus running time for both BVI and ADVI is shown in \Cref{fig:sensor-timing}. Recall that we use $n=100$ particles for both BVI and ADVI to evaluate stochastic gradients. 

\subsection{Bayesian Logistic Regression}
\begin{figure}[!htbp]
\centering
\includegraphics[width=0.90\textwidth]{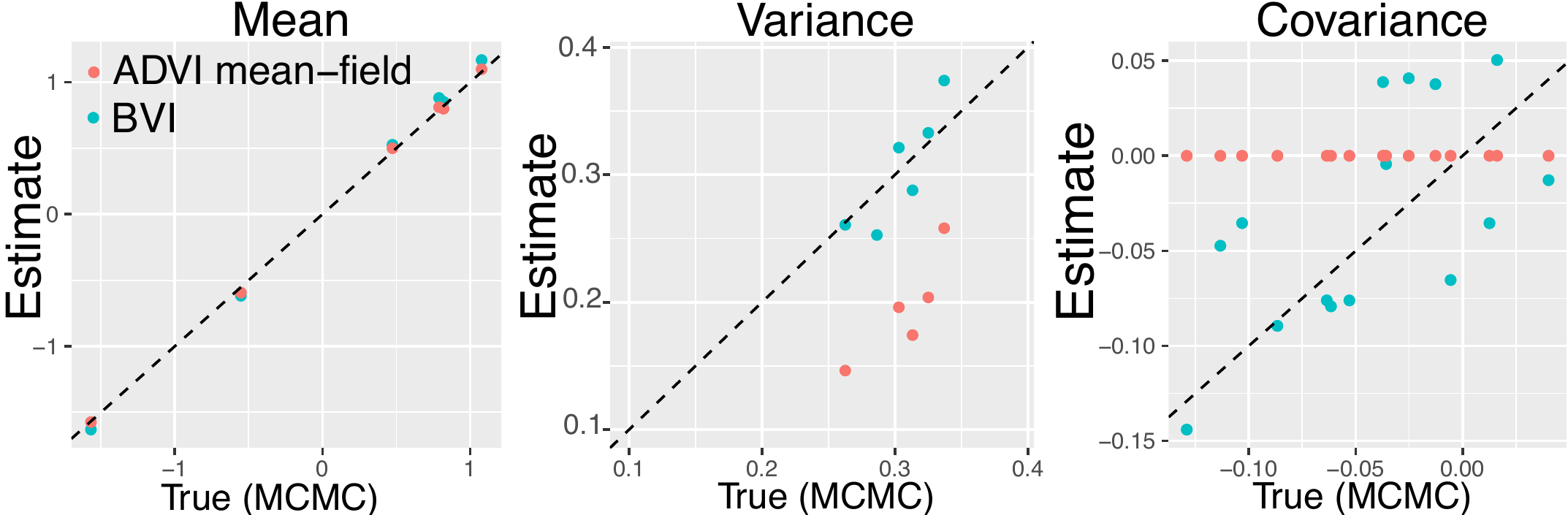}
\caption{Estimated posterior mean and covariance for logistic regression (dashed: estimate = true).}
\label{fig:logit-mean-cov}
\end{figure}

We apply our algorithm to Bayesian logistic regression for the \texttt{Nodal} dataset \citep{brown1980prediction}, consisting of $N=53$ observations of 6 predictors $\bm{x}_i$ (intercept included) and a binary response $y_i \in\{-1, +1\}$. The likelihood is $\prod_{i=1}^N g(y_i\mathbf{x}_i^\top\bm\beta)$, where $g(x)=(1+e^{-x})^{-1}$ and we use the prior $\bm{\beta} \sim \mathcal{N}(\bm{0}, \bm{I})$. We treat results from a Polya-Gamma sampler (a specialized MCMC algorithm) using R package \texttt{BayesLogit} \citep{polson2013bayesian} as the ground truth.
As shown in \Cref{fig:logit-mean-cov}, while both BVI and ADVI capture the correct mean, BVI provides better estimates of the variance and, unlike MFVI, does not set the covariances to zero. Here, though, we also see the limitations of BVI in the noisy covariance estimates. As expected, the case for BVI is more compelling in the previous examples, where MFVI yields biased estimates of the posterior means~\citep{turner2011two} or the posterior is multimodal.

\section{Conclusions} \label{sec:conclusions}
We have developed a new variational inference (VI) algorithm, \emph{boosting variational inference} (BVI). And we have demonstrated empirically that BVI can capture posterior multimodality and nonstandard shapes. There are notoriously few theoretical guarantees for quality of inference in VI. Nonetheless, the family considered here (\emph{all possible finite mixtures} of a parametric base distribution) may be more ripe for future theoretical analysis due to its desirable limiting properties, though such an analysis is outside the scope of the present paper.

\newpage
\bibliographystyle{abbrvnat}
\bibliography{bvi}

\begin{thebibliography}{37}
\providecommand{\natexlab}[1]{#1}
\providecommand{\url}[1]{\texttt{#1}}
\expandafter\ifx\csname urlstyle\endcsname\relax
  \providecommand{\doi}[1]{doi: #1}\else
  \providecommand{\doi}{doi: \begingroup \urlstyle{rm}\Url}\fi

\bibitem[Ahn et~al.(2013)Ahn, Chen, and Welling]{ahn2013distributed}
S.~Ahn, Y.~Chen, and M.~Welling.
\newblock Distributed and adaptive darting {M}onte {C}arlo through
  regenerations.
\newblock In \emph{AISTATS}, pages 108--116. Citeseer, 2013.

\bibitem[Bishop(2006)]{bishop2006pattern}
C.~M. Bishop.
\newblock \emph{Pattern {R}ecognition and {M}achine {L}earning}, chapter~10.
\newblock Springer, New York, 2006.

\bibitem[Bishop et~al.(1998)Bishop, Lawrence, Jaakkola, and
  Jordan]{bishop1998approximating}
C.~M. Bishop, N.~Lawrence, T.~Jaakkola, and M.~I. Jordan.
\newblock Approximating posterior distributions in belief networks using
  mixtures.
\newblock In \emph{Advances in Neural Information Processing Systems 10:
  Proceedings of the 1997 Conference}, volume~10, page 416, 1998.

\bibitem[Blei et~al.(2016)Blei, Kucukelbir, and McAuliffe]{blei2016variational}
D.~M. Blei, A.~Kucukelbir, and J.~D. McAuliffe.
\newblock Variational inference: A review for statisticians.
\newblock \emph{arXiv preprint arXiv:1601.00670}, 2016.

\bibitem[Brown(1980)]{brown1980prediction}
B.~Brown.
\newblock Prediction analyses for binary data.
\newblock \emph{Biostatistics Casebook}, pages 3--18, 1980.

\bibitem[Byrd et~al.(1995)Byrd, Lu, Nocedal, and Zhu]{byrd1995limited}
R.~H. Byrd, P.~Lu, J.~Nocedal, and C.~Zhu.
\newblock A limited memory algorithm for bound constrained optimization.
\newblock \emph{SIAM Journal on Scientific Computing}, 16\penalty0
  (5):\penalty0 1190--1208, 1995.

\bibitem[Carpenter et~al.(2016)Carpenter, Gelman, Hoffman, Lee, Goodrich,
  Betancourt, Brubaker, Guo, Li, and Riddell]{carpenter2016stan}
B.~Carpenter, A.~Gelman, M.~Hoffman, D.~Lee, B.~Goodrich, M.~Betancourt, M.~A.
  Brubaker, J.~Guo, P.~Li, and A.~Riddell.
\newblock Stan: A probabilistic programming language.
\newblock \emph{J Stat Softw}, 2016.

\bibitem[Epanechnikov(1969)]{epanechnikov1969non}
V.~A. Epanechnikov.
\newblock Non-parametric estimation of a multivariate probability density.
\newblock \emph{Theory of Probability \& Its Applications}, 14\penalty0
  (1):\penalty0 153--158, 1969.

\bibitem[Freund and Schapire(1995)]{freund1995desicion}
Y.~Freund and R.~E. Schapire.
\newblock A desicion-theoretic generalization of on-line learning and an
  application to boosting.
\newblock In \emph{European conference on computational learning theory}, pages
  23--37. Springer, 1995.

\bibitem[Freund et~al.(1999)Freund, Schapire, and Abe]{freund1999short}
Y.~Freund, R.~Schapire, and N.~Abe.
\newblock A short introduction to boosting.
\newblock \emph{Journal-Japanese Society For Artificial Intelligence},
  14\penalty0 (771-780):\penalty0 1612, 1999.

\bibitem[Friedman et~al.(2000)Friedman, Hastie, and
  Tibshirani]{friedman2000additive}
J.~Friedman, T.~Hastie, and R.~Tibshirani.
\newblock Additive logistic regression: a statistical view of boosting (with
  discussion and a rejoinder by the authors).
\newblock \emph{The Annals of Statistics}, 28\penalty0 (2):\penalty0 337--407,
  2000.

\bibitem[Friedman(2001)]{friedman2001greedy}
J.~H. Friedman.
\newblock Greedy function approximation: a gradient boosting machine.
\newblock \emph{Annals of statistics}, pages 1189--1232, 2001.

\bibitem[Gershman et~al.(2012)Gershman, Hoffman, and
  Blei]{gershman2012nonparametric}
S.~Gershman, M.~Hoffman, and D.~Blei.
\newblock Nonparametric variational inference.
\newblock In \emph{Proceedings of the 29th International Conference on Machine
  Learning}, pages 663--670, 2012.

\bibitem[Giordano et~al.(2015)Giordano, Broderick, and
  Jordan]{giordano2015linear}
R.~J. Giordano, T.~Broderick, and M.~I. Jordan.
\newblock Linear response methods for accurate covariance estimates from mean
  field variational {B}ayes.
\newblock In \emph{Advances in Neural Information Processing Systems}, pages
  1441--1449, 2015.

\bibitem[Haario et~al.(1999)Haario, Saksman, and Tamminen]{haario1999adaptive}
H.~Haario, E.~Saksman, and J.~Tamminen.
\newblock Adaptive proposal distribution for random walk {M}etropolis
  algorithm.
\newblock \emph{Computational Statistics}, 14\penalty0 (3):\penalty0 375--396,
  1999.

\bibitem[Haario et~al.(2001)Haario, Saksman, and Tamminen]{haario2001adaptive}
H.~Haario, E.~Saksman, and J.~Tamminen.
\newblock An adaptive {M}etropolis algorithm.
\newblock \emph{Bernoulli}, pages 223--242, 2001.

\bibitem[Hoffman and Gelman(2014)]{hoffman2014no}
M.~D. Hoffman and A.~Gelman.
\newblock The {No-U-turn} sampler: adaptively setting path lengths in
  {H}amiltonian {M}onte {C}arlo.
\newblock \emph{Journal of Machine Learning Research}, 15\penalty0
  (1):\penalty0 1593--1623, 2014.

\bibitem[Ihler et~al.(2005)Ihler, Fisher, Moses, and
  Willsky]{ihler2005nonparametric}
A.~T. Ihler, J.~W. Fisher, R.~L. Moses, and A.~S. Willsky.
\newblock Nonparametric belief propagation for self-localization of sensor
  networks.
\newblock \emph{IEEE Journal on Selected Areas in Communications}, 23\penalty0
  (4):\penalty0 809--819, 2005.

\bibitem[Jaakkola and Jordan(1998)]{jaakkola1998improving}
T.~Jaakkola and M.~I. Jordan.
\newblock Improving the mean field approximation via the use of mixture
  distributions.
\newblock In \emph{Learning in graphical models}, pages 163--173. Springer,
  1998.

\bibitem[Jordan et~al.(1999)Jordan, Ghahramani, Jaakkola, and
  Saul]{jordan1999introduction}
M.~I. Jordan, Z.~Ghahramani, T.~S. Jaakkola, and L.~K. Saul.
\newblock An introduction to variational methods for graphical models.
\newblock \emph{Machine Learning}, 37\penalty0 (2):\penalty0 183--233, 1999.

\bibitem[Kingma and Welling(2014)]{kingma2013auto}
D.~P. Kingma and M.~Welling.
\newblock Auto-encoding variational {B}ayes.
\newblock In \emph{The International Conference on Learning Representations},
  2014.

\bibitem[Kucukelbir et~al.(2015)Kucukelbir, Ranganath, Gelman, and
  Blei]{kucukelbir2015automatic}
A.~Kucukelbir, R.~Ranganath, A.~Gelman, and D.~Blei.
\newblock Automatic variational inference in {S}tan.
\newblock In \emph{Advances in neural information processing systems}, pages
  568--576, 2015.

\bibitem[Lan et~al.(2014)Lan, Streets, and Shahbaba]{lan2014wormhole}
S.~Lan, J.~Streets, and B.~Shahbaba.
\newblock Wormhole {H}amiltonian {M}onte {C}arlo.
\newblock In \emph{Proceedings of the... AAAI Conference on Artificial
  Intelligence. AAAI Conference on Artificial Intelligence}, volume 2014, page
  1953. NIH Public Access, 2014.

\bibitem[Li and Barron(1999)]{li1999mixture}
J.~Q. Li and A.~R. Barron.
\newblock Mixture density estimation.
\newblock In \emph{Advances in Neural Information Processing Systems}, pages
  279--285, 1999.

\bibitem[MacKay(2003)]{mackay2003information}
D.~J.~C. MacKay.
\newblock \emph{Information Theory, Inference, and Learning Algorithms},
  chapter~33.
\newblock Cambridge University Press, 2003.

\bibitem[Miller et~al.(2016)Miller, Foti, and Adams]{miller2016variational}
A.~C. Miller, N.~Foti, and R.~P. Adams.
\newblock Variational boosting: Iteratively refining posterior approximations.
\newblock \emph{arXiv preprint arXiv:1611.06585}, 2016.

\bibitem[Parzen(1962)]{parzen1962estimation}
E.~Parzen.
\newblock On estimation of a probability density function and mode.
\newblock \emph{The Annals of Mathematical Statistics}, 33\penalty0
  (3):\penalty0 1065--1076, 1962.

\bibitem[Polson et~al.(2013)Polson, Scott, and Windle]{polson2013bayesian}
N.~G. Polson, J.~G. Scott, and J.~Windle.
\newblock Bayesian inference for logistic models using {P}{\'o}lya--gamma
  latent variables.
\newblock \emph{Journal of the American Statistical Association}, 108\penalty0
  (504):\penalty0 1339--1349, 2013.

\bibitem[Rakhlin(2006)]{rakhlin2006applications}
A.~Rakhlin.
\newblock \emph{Applications of empirical processes in learning theory:
  algorithmic stability and generalization bounds}.
\newblock PhD thesis, Massachusetts Institute of Technology, 2006.

\bibitem[Ranganath et~al.(2014)Ranganath, Gerrish, and
  Blei]{ranganath2014black}
R.~Ranganath, S.~Gerrish, and D.~M. Blei.
\newblock Black box variational inference.
\newblock In \emph{AISTATS}, pages 814--822, 2014.

\bibitem[Robbins and Monro(1951)]{robbins1951stochastic}
H.~Robbins and S.~Monro.
\newblock A stochastic approximation method.
\newblock \emph{The Annals of Mathematical Statistics}, pages 400--407, 1951.

\bibitem[Roberts and Rosenthal(2009)]{roberts2009examples}
G.~O. Roberts and J.~S. Rosenthal.
\newblock Examples of adaptive {MCMC}.
\newblock \emph{Journal of Computational and Graphical Statistics}, 18\penalty0
  (2):\penalty0 349--367, 2009.

\bibitem[Rue et~al.(2009)Rue, Martino, and Chopin]{rue2009approximate}
H.~Rue, S.~Martino, and N.~Chopin.
\newblock Approximate {B}ayesian inference for latent {G}aussian models by
  using integrated nested {L}aplace approximations.
\newblock \emph{Journal of the {R}oyal {S}tatistical {S}ociety: Series {B}
  (statistical methodology)}, 71\penalty0 (2):\penalty0 319--392, 2009.

\bibitem[Turner and Sahani(2011)]{turner2011two}
R.~E. Turner and M.~Sahani.
\newblock Two problems with variational expectation maximisation for
  time-series models.
\newblock In D.~Barber, A.~T. Cemgil, and S.~Chiappa, editors, \emph{{B}ayesian
  Time Series Models}. Cambridge University Press, 2011.

\bibitem[Wainwright and Jordan(2008)]{wainwright2008graphical}
M.~J. Wainwright and M.~I. Jordan.
\newblock Graphical models, exponential families, and variational inference.
\newblock \emph{Foundations and Trends in Machine Learning}, 1\penalty0
  (1-2):\penalty0 1--305, 2008.

\bibitem[Wang and Titterington(2004)]{wang2005inadequacy}
B.~Wang and M.~Titterington.
\newblock Inadequacy of interval estimates corresponding to variational
  {B}ayesian approximations.
\newblock In \emph{Workshop on Artificial Intelligence and Statistics}, pages
  373--380, 2004.

\bibitem[Zhang(2003)]{zhang2003sequential}
T.~Zhang.
\newblock Sequential greedy approximation for certain convex optimization
  problems.
\newblock \emph{IEEE Transactions on Information Theory}, 49\penalty0
  (3):\penalty0 682--691, 2003.

\end{thebibliography}

\end{document}